\date{}
\newtheorem{thm}{Theorem}
\theoremstyle{remark}
\newtheorem{remark}{Remark}
\theoremstyle{definition}
\newtheorem{assum}{Assumption}
\begin{document}

\title{An Actor-Critic Algorithm with Function Approximation for Risk Sensitive Cost Markov Decision Processes}

\author[1]{Soumyajit Guin}
\author[2]{Vivek S.\ Borkar}
\author[1]{Shalabh Bhatnagar}

\affil[1]{Department of Computer Science and Automation, Indian Institute of Science, Bengaluru 560012, India\footnote{Email: gsoumyajit@iisc.ac.in, shalabh@iisc.ac.in. SB was supported by the J.C.Bose Fellowship of the Science and Engineering Research Board, Project No.~DFTM/02/3125/M/04/AIR-04 from DRDO under DIA-RCOE, the Walmart Centre for Tech Excellence, IISc and the Robert Bosch Centre for Cyber Physical Systems, IISc.}}
\affil[2]{Department of Electrical Engineering, Indian Institute of Technology Bombay, Mumbai 400076, India\footnote{Email: borkar.vs@gmail.com. VSB was initially supported by S.\ S.\ Bhatnagar Fellowship from the Council of Scientific and Industrial Research, Government of India, and subsequently by a grant from Google Research.}}
\maketitle

\begin{abstract}
In this paper, we consider the risk-sensitive cost criterion with exponentiated costs for Markov decision processes and develop a model-free policy gradient algorithm in this setting. Unlike additive cost criteria such as average or discounted cost, the risk-sensitive cost criterion is less studied due to the complexity resulting from the multiplicative structure of the resulting Bellman equation. We develop an actor-critic algorithm with function approximation in this setting and provide its asymptotic convergence analysis. We also show the results of numerical experiments that demonstrate the superiority in performance of our algorithm over other recent algorithms in the literature.
\end{abstract}

\section{Introduction}
\label{introduction}

In this section, we provide an introduction to risk sensitive control based reinforcement learning algorithms as well as discuss some related work along these lines.

\subsection{Introduction to Risk-Sensitive Control Based Algorithms}
Reinforcement Learning (RL) is a machine learning paradigm that allows an agent to learn to take optimal actions in an uncertain environment, see for instance, \cite{SuttonBarto2018}, \cite{NDPbook} for text-book treatments. Traditionally, the goal in RL is to minimize the expected summation of costs or alternatively, to maximize the expected summation of rewards. In this paper, however, we will look into the risk-sensitive cost criterion, where we aim to minimize the expected product of costs. This generalizes the classical mean-variance optimization formulation, as is easy to see by expanding the risk-sensitive cost using a Taylor series expansion up to the quadratic term. Unlike the mean-variance formulation, it allows for a dynamic programming based solution. This has been an early motivation to study this problem.
This very aspect also helps us develop critic schemes for evaluating a policy, where we can use the current estimate of the value function to bootstrap.

In Section~\ref{framework}, we present the basic framework of Markov decision processes (MDP) under the risk sensitive cost criterion and recall the  policy gradient formula for the same from \cite{BorkarD}. This will be followed by Section~\ref{acarc} with fully justified critic scheme and actor schemes. We then provide the convergence analysis of our algorithm in Section~\ref{convergence}. Next, in Section~\ref{numerical}, we provide numerical experiments to show that our algorithm successfully finds policies with low standard deviation and is more robust than other algorithms in the literature. By ‘robust’ we mean that the worst case
cumulative cost is better managed. This is because the risk-sensitive
criterion minimizes the maximum expected growth rate of the exponentiated cost.
 Finally, Section~\ref{conclusion} presents the concluding remarks and points to certain interesting future directions. 

\subsection{Other Related Work}

We survey here some prior work related to the development of algorithms for risk-sensitive cost control in the case of exponentiated costs. Model-based value iteration and policy iteration methods for risk-sensitive MDPs are given in \cite{BorkarM, M2}. \cite{murthy2023} proves the convergence of modified policy iteration (MPI) in risk sensitive MDPs, where MPI is a generalization of both value and policy iteration. An algorithm for estimating the value function for the evaluation of policies using linear function approximation is provided in \cite{BasuA}. 
Amongst (model-free) reinforcement learning algorithms, there has been prior work on developing algorithms in the lookup table case. For instance,  the Q-Learning algorithm for risk-sensitive cost control in the lookup table setting is proposed and analyzed in \cite{BorkarC}. A tabular actor-critic algorithm for risk-sensitive cost is proposed and analyzed  in \cite{BorkarD} where a policy gradient formula is obtained. In \cite{JMLR}, the problem of long-run average cost control in MDPs conditioned on rare events is reformulated using the risk-sensitive cost criterion and an actor-critic algorithm is analyzed where the critic runs a temporal difference recursion for the risk sensitive cost and the actor incorporates a zeroth order gradient estimation scheme from \cite{Bhatnagaretal, BhatnagarBook}. This algorithm is again for the lookup table case. In \cite{nass2019, fei2021} learning algorithms for risk-sensitive cost have been presented in the episodic setting. We mention here that there is a vast body of literature on risk sensitive measures. Exponentiated risk sensitive cost measures are also popular as entropic risk measures. It is shown in \cite{kupper} that the only dynamic risk measures that are law invariant and time consistent are the entropic risk measures though they are not coherent, see \cite{artzner}. An overview of model-based and model-free algorithms in various settings including for risk sensitive cost criterion is provided in \cite{BorkarBridge}.

Unlike the above algorithms, our algorithm uses function approximation which is essential for large state and action settings. A policy gradient algorithm for risk-sensitive cost under function approximation has recently been proposed in \cite{moharrami2024}, where the authors update the policy using Monte-Carlo estimates of the risk-sensitive cost obtained from a sample trajectory. The sample path starts in a certain prescribed state that is positive recurrent under all policies and ends with that state. This leads to high variance in the estimation of the cost, rendering the algorithm potentially infeasible for large state spaces. In contrast, our algorithm is an incremental update, albeit multi-timescale procedure that performs updates each time a data sample becomes available. Our algorithm learns the critic values that are functions of the state, and uses these to update the policy. As seen from the experiments in Section~\ref{numerical}, this results in lower standard deviation and makes the algorithm more robust.

\subsection{Our Contributions}

We now summarize our main contributions in this paper.
\begin{itemize}
	\item We present a novel incremental-update actor-critic algorithm for the risk-sensitive cost criterion.
	\item We present an asymptotic convergence analysis of the proposed algorithm.
	\item We present numerical experiments that demonstrate that the algorithm is more robust than other algorithms in the literature.
\end{itemize}

\section{Problem Framework}
\label{framework}

By a Markov decision process (MDP), we mean a sequence $X_n,n\geq 0$ of random variables taking values in  a finite state space $S$ 
and satisfying
\[P(X_{n+1}=j\mid X_k, Z_k, k\leq n) = p(X_n, Z_n,j), \ \forall n\geq 0,\ \mbox{ a.s.}
\]
Here $Z_n$ is the action chosen at time $n$ when the state is $X_n$ and  $Z_n \in A$, the finite set of actions. For ease of exposition, we assume that every action in $A$ is feasible in every state $i\in S$. 
Also, $p(i,a,j)$ denotes the transition probability from state $i$ to $j$ when action $a$ is chosen in state $i$. If $Z_n = f(X_n)$, $\forall n\geq 0$, for some function $\displaystyle f:S\rightarrow A$, then we call $f$ a stationary deterministic policy (SDP). Similarly, if $Z_n$ is sampled from a distribution $\pi(X_n)$ for some $\pi: S\rightarrow \mathcal{P}(A)$ $\forall n$, then $\pi$ is called a stationary randomized policy (SRP).
Here, $\mathcal{P}(A)$ denotes the space of probability vectors on $A$. In what follows, we shall work with SRP. We let $\pi(i) = (\pi(i,a),a\in A)^T,i \in S$ denote an SRP and $c: S\times A\times S\rightarrow \mathbb{R}$, the single-stage cost function. Our goal is to find the optimal policy which minimizes the risk-sensitive cost given by
\begin{equation}
	J_{\pi}=\limsup_{n \uparrow \infty} \frac{1}{n} \log \mathbb{E} \big[e^{\sum_{m=0}^{n-1}\alpha c(X_m,Z_m,X_{m+1})} \big],
\end{equation}
where $\alpha$ is the risk-sensitivity parameter.
\begin{assum}\label{assum1}
Under every stationary randomized policy $\pi$, the Markov chain is irreducible and aperiodic.
\end{assum}

Let $P_{\pi}$ be the probability transition matrix under policy $\pi$ with entries $P_{\pi}(i,j)=\sum_{a \in A}\pi(i,a)p(i,a,j)$. By Assumption~\ref{assum1}, $P_{\pi}$ is irreducible. Let $d_{\pi}(i),i \in S$ be the associated stationary distribution. For any policy $\pi$, we consider the matrix $Q_{\pi}$, whose $(i,j)$-th element is 
\[
\sum_{a \in A} \pi(i,a)e^{\alpha c(i,a,j)}p(i,a,j).
\]
By Assumption~\ref{assum1}, the above matrix is irreducible and aperiodic. Therefore by Perron-Frobenius theorem, it has a unique eigenvalue-eigenvector pair $(\lambda_{\pi},V_{\pi})$, with $\lambda_{\pi}>0$ and $V_{\pi}>0$ componentwise, satisfying
\begin{equation}\label{egeq}
\lambda_{\pi}V_{\pi}(i)=\sum_{a \in A}\pi(i,a)\sum_{j \in S} e^{\alpha c(i,a,j)}p(i,a,j)V_{\pi}(j),
\end{equation}
{rendered unique by setting $V_\pi(i_0)=1$ for some fixed $i_0\in S$.}

Define another transition probability  function by: for $i,j \in S$:
\begin{equation}
\tilde{p}_{\pi}(i,j)=\frac{\sum_{a \in A}\pi(i,a)e^{\alpha c(i,a,j)}p(i,a,j)V_{\pi}(j)}{V_{\pi}(i)\lambda_{\pi}}.
\end{equation}
It can be seen that $\tilde{p}_{\pi}$ is indeed a probability transition function. Denote by $\{\tilde{X}_n\}$ the Markov chain under policy $\pi$ with probability transition function $\tilde{p}_{\pi}$. Then $\{\tilde{X}_n\}$ is also irreducible and aperiodic. Therefore by \cite{BorkarM}, $\log{\lambda_{\pi}}$ is the cost associated with a policy $\pi$. 

Function approximation becomes necessary when the state-action space is  large. We parameterize both the policy and the value function. However, we consider parameterized policies and associated value functions in the look-up table case and later consider the case of function approximation for the value function as well. Let the  policy $\pi$ be  parameterized by $\theta \in \mathbb{R}^{x_1}$. Henceforth we write $\pi_{\theta}$ instead of $\pi$. By an abuse of notation, throughout the paper, we shall use $\pi$ and $\theta$ interchangeably. We now have the following assumption.
\begin{assum}\label{assum2}
$\pi_{\theta}$ is twice continuously differentiable in $\theta$. Moreover $\pi_{\theta}(i,a)>0$ for all $a \in A, i \in S$.
\end{assum}
\begin{remark}
The above assumption can be seen to be satisfied, for example, by the parameterized Gibbs or Boltzmann distribution, see Proposition 11 of \cite{LakshminarayananC}. The commonly used $\epsilon$-greedy policies that randomize among all controls with a small probability provide another example.
\end{remark}

The policy gradient expression in the case when $V_\pi(i)$, $i\in S$ is estimated directly without function approximation, has been derived in \cite{BorkarD}, which we  recall here. The policy gradient provides the gradient search direction in which the parameters should be updated in order to optimize the cost. Let the associated stationary distribution of the Markov chain $\{\tilde{X}_n\}$  with probability transition function $\tilde{p}_{\pi}$ under policy $\pi$ be \(\tilde{d}_{\pi}(i), i\in S\).
We know that the cost under a policy $\pi$ is $\log{\lambda_{\pi}}$. Then the policy gradient expression using a baseline $\zeta(i),i\in S$ is as follows, see \cite{BorkarD}:
\begin{equation}\label{PG}
\begin{split}
&\nabla_{\theta} \log{\lambda_{\pi}}\\
=&\sum_i \tilde{d}_{\pi}(i) \sum_a\nabla_{\theta} \pi(i,a)\\
&\times \Bigg(\frac{\sum_j e^{\alpha c(i,a,j)} p(i,a,j) V_{\pi}(j)}{\lambda_{\pi} V_{\pi}(i)}-\zeta(i)\Bigg)\\
=&\sum_i \tilde{d}_{\pi}(i) \sum_a \pi(i,a) \nabla_{\theta} \log{\pi(i,a)}\\
&\times \Bigg(\frac{\sum_j e^{\alpha c(i,a,j)} p(i,a,j) V_{\pi}(j)}{\lambda_{\pi} V_{\pi}(i)}-\zeta(i)\Bigg).
\end{split}
\end{equation}
An ideal baseline here will be $\zeta(i)=1, \forall i \in S$, which we will be using henceforth. In what follows, we use linear function approximation for $V_\pi(i), i\in S$, as follows: $V_{\pi}(i) \approx r^T\phi(i), i \in S$ where $r \in \mathbb{R}^{x_2}$  is a parameter vector  and $\phi(i) \in \mathbb{R}^{x_2}$ is the feature vector associated with state $i$. Let $\Phi \in \mathbb{R}^{|S|\times x_2}$ be the feature matrix with rows $\phi(i)^T$. Let the $(i,k)$-th entry of $\Phi$ be $\phi^k(i)$. We now make the following assumption analogous to \cite{BasuA}:
\begin{assum}\label{assum3}
The columns of $\Phi$ are orthogonal vectors in the positive cone of $\mathbb{R}^{|S|}$. Under every policy $\pi$, the submatrix of $P_{\pi}$ corresponding to $\cup_k \{i:\phi^k(i)>0\}$ is irreducible.
\end{assum} 

\section{Actor Critic Algorithm for Risk-sensitive cost}
\label{acarc}
We propose a three-timescale stochastic approximation algorithm for our problem. This allows us to treat all recursions separately in the convergence analysis, see Chapter 8 of \cite{BorkarB} for details. Before we proceed further, we make the following assumptions on the step-sizes or learning rates of the coupled iterates. 
\begin{assum}[Step-Sizes]
	\label{a4}
	The step-size sequences $a(n)$, $b(n)$, $c(n)$, $n\geq 0$, satisfy the following conditions:
	\begin{itemize}
        \item[(i)] $a(n),b(n),c(n)>0$, $\forall n$.
		\item[(ii)] ${\displaystyle \sum_n a(n)=\sum_n b(n)=\sum_n c(n) =\infty}$.
		\item[(iii)] ${\displaystyle \sum_n (a(n)^2 + b(n)^2+c(n)^2)<\infty}$. 
		\item[(iv)] ${\displaystyle \frac{b(n)}{a(n)} \rightarrow 0}$, ${\displaystyle \frac{c(n)}{b(n)} \rightarrow 0}$ as $n\rightarrow\infty$.
		\end{itemize}
\end{assum}

We will start with our critic update. Let $i_n$ be the state visited and $z_n$ the action picked at time instant $n$. Let us first consider the problem when the model (cost and probability  transition function) is known, and our goal here is to find $V_{\pi}$ for a given policy $\pi$. Since $V_{\pi}$ is the eigenvector corresponding to the dominant eigenvalue $\lambda_{\pi}$ of the matrix $Q_{\pi}$, we can use the well-known power method to find it. Let us define the operator $T_{\pi}:\mathbb{R}^{|S|} \rightarrow \mathbb{R}^{|S|}$ as below, 
\begin{equation}
T_{\pi}V(i)=\sum_{a \in A}\pi(i,a)\sum_{j \in S}p(i,a,j)e^{\alpha c(i,a,j)}V(j).
\end{equation}
Consider the relative value iteration scheme adapted from \cite{BorkarM}:
\begin{equation}
V_{n+1}(i)=\frac{T_{\pi}V_n(i)}{T_{\pi}V_n(i_0)}, \mbox{ } i\in S, \mbox{ } n\geq 0,
\end{equation}
where $i_0$ is a fixed state. By standard arguments for power iteration,  $V_n \rightarrow V_{\pi}$ and $T_{\pi}V_n(i_0) \rightarrow \lambda_{\pi}$ as $n\rightarrow\infty$. 

In order to motivate the form of our actor-critic algorithm with function approximation that we develop, we first start by looking at the form of an actor-critic algorithm in the tabular setting. We shall subsequently derive the form of the algorithm with function approximation. 
The tabular version of the algorithm when the model is unknown is adapted from the Q-learning algorithm in \cite{BorkarC}, also formally given in \cite{BasuA}. Let $\nu(i_n,n)$ be the number of times that state $i_n$ is visited until the time instant $n$. This quantity helps keep track of the number of times $V(i_n)$ is updated until instant $n$ as we implement here an asynchronous version of the algorithm, see \cite{BorkarAsyn, BorkarB} for a general treatment of asynchronous stochastic approximations. The critic update for the tabular version would correspond to the following: 
\begin{equation}\label{critic-tabular}
\begin{split}
V_{n+1}(i_n)=V_n(i_n)+a(\nu(i_n,n))\\
\Bigg(\frac{e^{\alpha c(i_n,z_n,i_{n+1})}V_n(i_{n+1})}{V_n(i_0)}-V_n(i_n)\Bigg),
\end{split}
\end{equation}
{with $V_{n+1}(j)=V_n(j)$, $\forall j\not= i_n$.}
The tabular iterates $V_n \stackrel{\triangle}{=} (V_n(i),i\in S)^T$ then converge to $V_{\pi} \stackrel{\triangle}{=} (V_\pi(i),i\in S)^T$ a.s. with $V_\pi(i_0)=\lambda_\pi$, the proof can be found in \cite{BorkarC}. 

We next describe the actor update in the tabular case. Note that in the policy gradient theorem \eqref{PG}, we have summation over $\tilde{d}_{\pi}(i),i \in S$, which is the steady state distribution of the modified Markov chain $\{\tilde{X_n}\}$. This is different from the policy gradient result for average cost criteria (see \cite{SuttonBarto2018}), where the summation is over the stationary distribution of the original Markov chain. Averaging the samples generated from the original simulation will not give us the right gradient. To get around this problem, we use the relative value iteration scheme described in \cite{Abounadi}. Let us define the quantities from the policy gradient \eqref{PG} taking $\xi(i)=1, \forall i\in S$:
\[
\begin{split}
&g(i,a,j):=\nabla_{\theta}\log{\pi(i,a)}\left(\frac{e^{\alpha c(i,a,j)}V_\pi(j)}{V_\pi(i)\lambda_\pi}-1\right),\\
&\rho(i,a,j):=\left(\frac{e^{\alpha c(i,a,j)}V_\pi(j)}{V_\pi(i)\lambda_\pi}\right).
\end{split}
\]
Then we get,
\[
\nabla_\theta \log \lambda_\pi=\sum_{i \in S} \tilde{d}_\pi(i)\sum_{a \in A} \pi(i,a)\sum_{j \in S}p(i,a,j)g(i,a,j).
\]
It can be seen from \eqref{PG} that the gradient of the risk sensitive cost is the expected value of $g(i, a,j)$ but with respect to the stationary measure $\tilde{d}_\pi$ of the twisted kernel $\tilde{p}_\pi$. The relationship between the stationary measures of the original and twisted kernels is not apparent and we have access to samples from the original kernel $P_\pi$ only.

Since $\nabla_\theta \log \lambda_\pi$ has an average cost interpretation, there exists $\tilde{W}(i),i \in S$ such that:
\[
\begin{split}
\tilde{W}(i)+\nabla_\theta \log\lambda_\theta=\sum_{a \in A} \pi(i,a)\sum_{j \in S}p(i,a,j)(g(i,a,j)\\
+\rho(i,a,j)\tilde{W}(j)).
\end{split}
\]
Before we proceed further, we recall the learning scheme used in \cite{Abounadi}. Subsequently, we shall incorporate the same in our algorithm. We define the average cost under any policy $\pi$ as:
\begin{equation}
\begin{split}
G_{\pi}=\sum_{i \in S} d_{\pi}(i)\sum_{a \in A}\pi(i,a)\sum_{j \in S}p(i,a,j)g(i,a,j)
\end{split}
\end{equation}
where $g:S\times A\times S\rightarrow \mathbb{R}$ is the single-stage cost of the associated Markov chain. The relative value iteration (RVI) algorithm when the model is known is given by:
\[
\begin{split}
W_{n+1}(i)=\sum_{a \in A} \pi (i,a) \sum_{j \in S}p(i,a,j)\big(g(i,a,j)+W_n(j)\big)\\
-W_n(i_0).
\end{split}
\]
The online RL version of the RVI update (when the model is unknown) is then given by:
\begin{equation}
\label{avg12}
\begin{split}
W_{n+1}(i_n)=W_n(i_n)+b(\nu(i_n,n))\big(g(i_n,z_n,i_{n+1})\\
+W_n(i_{n+1})-W_n(i_n)-W_n(i_0)\big).
\end{split}
\end{equation}
Then $W_n(i_0)\to G_{\pi}$ a.s.\ as $n\rightarrow\infty$ \cite{Abounadi}. As explained before, averaging the samples obtained from the original simulation does not give the precise gradient and one needs to resort to an off-policy approach. Thus, consider the following algorithm in place of (\ref{avg12}): 
\begin{equation}\label{avg-cost-td}
\begin{split}
\tilde{W}_{n+1}(i_n)=\tilde{W}_n(i_n)+b(\nu(i_n,n))\big(g_n(i_n,z_n,i_{n+1})\\
+\rho_n(i_n,z_n,i_{n+1})\tilde{W}_n(i_{n+1})-\tilde{W}_n(i_n)-\tilde{W}_n(i_0)\big),
\end{split}
\end{equation}
where $\rho_n(i_n,z_n,i_{n+1})$ is an importance sampling (IS) ratio that accounts for the correction that would give the precise gradient. We define below the IS ratio $\rho_n$  and the single-stage cost function $g_n$ in our context. Define
\begin{equation}\label{modified-avg}
\tilde{G}(\pi)=\sum_{i \in S}\tilde{d}_{\pi}(i)\sum_{a \in A}\pi(i,a)\sum_{j \in S}p(i,a,j)g(i,a,j).
\end{equation}
We show in Theorem~\ref{thm2} in the next section that the sequence $\tilde{W}_n(i_0), n\geq 0$ converges to $\tilde{G}(\pi)$ almost surely. With  importance sampling, we get the summation over the stationary distribution of a different Markov chain. Define the cost-per-stage and the importance sampling ratio for our problem as follows:
\[
\begin{split}
&g_n(i_n,z_n,i_{n+1}):=\nabla_{\theta}\log{\pi(i_n,z_n)}\\
&\times \left(\frac{e^{\alpha c(i_n,z_n,i_{n+1})}V_n(i_{n+1})}{V_n(i_n)V_n(i_0)}-1\right),\\
&\rho_n(i_n,z_n,i_{n+1}):=\left(\frac{e^{\alpha c(i_n,z_n,i_{n+1})}V_n(i_{n+1})}{V_n(i_n)V_n(i_0)}\right).
\end{split}
\]
Note that our single-stage cost is a $\mathbb{R}^{x_1}$ vector, and so are the updates $\tilde{W}_n(i)$. Now $V_n\to V_\pi$ and $V_n(i_0) \to \lambda_\pi$, hence $g_n \to g$ and $\rho_n \to \rho$. Therefore, $\tilde{W}_n(i_0)$ converges to $\nabla_{\theta}\log{\lambda_{\pi}}$ in \eqref{PG} almost surely. The proof is an easy extension of Theorem~\ref{thm2} (see also \cite{BorkarBridge}). Collecting samples $g_n$ under the nominal kernel $p$ and using the TD learning algorithm in \eqref{avg-cost-td} with $\rho_n$ as the importance sampling ratio, one can circumvent the twisted kernel issue to evaluate $\nabla_\theta \log \lambda_\pi$. Finally, we have our actor update:
\begin{equation}\label{actor-tabular}
\theta_{n+1}=\theta_n-c(n)\tilde{W}_n(i_0).
\end{equation}
The above method, however, suffers from the curse of dimensionality, which brings us to the function approximation version of the critic $V_n$. We use the critic recursions given in \cite{BasuA}. Let us define,
\[
\begin{split}
&A_n=\sum_{m=0}^ne^{\alpha c(i_m,z_m,i_{m+1})}\phi(i_m)\phi(i_{m+1})^T,\\
&B_n=\sum_{m=0}^n\phi(i_m)\phi(i_m)^T.
\end{split}
\]
We can calculate $A_n$ and $B_n^{-1}$ recursively as follows: 
\begin{equation}\label{critic-quant}
\begin{split}
A_{n+1}=A_n+e^{\alpha c(i_{n+1},z_{n+1},i_{n+2})}\phi(i_{n+1})\phi(i_{n+2})^T,\\
B_{n+1}^{-1}=B_n^{-1}-\frac{B_n^{-1}\phi(i_{n+1})\phi(i_{n+1})^TB_n^{-1}}{1+\phi(i_{n+1})^T B_n^{-1} \phi(i_{n+1})}.
\end{split}
\end{equation}
The second recursion in (\ref{critic-quant}) follows from a 
simple application of the Sherman-Morrison formula and is a computationally efficient technique to recursively compute the inverse of matrix $B_{n+1}$ using the inverse of matrix $B_n$ \cite{NAC}.

To avoid the small divisor problem, we take two small sensitivity parameters $\delta_1,\delta_2>0$. The critic recursion in the function approximation setting is given as follows: 
\begin{equation}\label{critic-r}
r_{n+1}=r_n+a(n)\Big(\frac{B_n^{-1}A_n}{\phi(i_0)^Tr_n \vee \delta_1}-I\Big)r_n.
\end{equation}
We also need to avoid this problem for the importance sampling ratio obtained using the function approximators, which is now given by
\[
\tilde{\rho}(i_n,z_n,i_{n+1})=\frac{e^{\alpha c(i_n,z_n,i_{n+1})}r_n^T\phi(i_{n+1})}{(r_n^T\phi(i_n)r_n^T\phi(i_0))\vee \delta_2}.
\]
We approximate $\tilde{W}_n(i) \approx \tilde{w}\psi(i)$, where $\tilde{w}\in \mathbb{R}^{x_1 \times x_3}$ and $\psi(i)$ are suitable feature vectors. Unlike the tabular off-policy TD recursion \eqref{avg-cost-td}, off-policy TD with function approximation can diverge (see \cite{baird1995}). Hence we adapt the GTD2 algorithm given in \cite{sutton2009}, which is stable in the off-policy setting, see Theorem~\ref{thm3} of section~\ref{convergence} for details. We thus redefine the TD error via the IS ratio $\tilde{\rho}$ as follows: 
\begin{equation}\label{TD-error-2}
\begin{split}
\delta_n=&(\tilde{\rho}(i_n,z_n,i_{n+1})-1)\nabla_{\theta}\log{\pi(i_n,z_n)}- \tilde{w}_n\psi(i_0)\\
&+\tilde{\rho}(i_n,z_n,i_{n+1})\tilde{w}_n\psi(i_{n+1}) - \tilde{w}_n\psi(i_n).
\end{split}
\end{equation}
We have an extra parameter matrix $u_n \in \mathbb{R}^{x_1 \times x_3}$ analogous to the extra parameter vector used in the GTD2 algorithm \cite{sutton2009}. Our intermediate update for importance sampling is given as:
\begin{equation}\label{critic-2}
\begin{split}
u_{n+1}=&u_n+b(n)(\delta_n-u_n\psi(i_n))\psi(i_n)^T,\\
\tilde{w}_{n+1} = &\tilde{w}_n+b(n)u_n\psi(i_n)(\psi(i_n)+\psi(i_0)\\
&-\tilde{\rho}(i_n,z_n,i_{n+1})\psi(i_{n+1}))^T.
\end{split}
\end{equation}
Let $C \subset \mathbb{R}^{x_1}$ be a compact and convex set in which the actor parameter $\theta$ takes values. We define a projection operator $\Gamma: \mathbb{R}^{x_1} \rightarrow C$ that projects any $x \in \mathbb{R}^{x_1}$ to the unique nearest point in the set $C$ (uniqueness being assured by $C$ being compact and convex). Finally, we have our actor update using the approximate estimate of the gradient as follows:
\begin{equation}\label{actor-fa}
\theta_{n+1}=\Gamma(\theta_n-c(n)\tilde{w}_n \psi(i_0)). 
\end{equation}

\begin{algorithm}[tb]\label{algorithm1}
\caption{Risk-Sensitive Actor-Critic with Function Approximation (RSACFA)}
\textbf{Input}: step-sizes
\begin{algorithmic}[1]
\STATE Initialize starting state, $r_0$, $u_0$, $\tilde{w}_0$, $\theta_0$.
\FOR{each timestep \(n \geq 0 \)}
\STATE Play action and collect next state and cost
\STATE Update the required quantities \eqref{critic-quant}
\STATE Perform main critic update \eqref{critic-r}
\STATE Define the TD error via \eqref{TD-error-2}
\STATE Perform the importance sampling update \eqref{critic-2}
\STATE Perform the Actor Update \eqref{actor-fa}
\ENDFOR
\end{algorithmic}
\end{algorithm}

\section{Convergence Analysis}
\label{convergence}
Since $b(n)=o(a(n))$ and $c(n)=o(b(n))$, it follows that  $c(n)=o(a(n))$. Thus by standard two time-scale analysis, $\theta_n\approx \theta$, a constant, when analyzing \eqref{critic-r} and \eqref{critic-2}
(see Chapter 8 of \cite{BorkarB}). Let $D_{\theta}$ be the diagonal matrix with stationary distribution $d_{\theta}(i),i \in S$ of the Markov chain $\{X_n\}$ along the diagonal. Define the matrix
\[
\mathcal{M}_{\theta}=\sqrt{D_{\theta}}\Phi(\Phi^TD_{\theta}\Phi)^{-1}\Phi^TD_{\theta}Q_{\theta} \sqrt{D_{\theta}}^{-1}.
\]
It is easy to see that $\mathcal{M}_{\theta}$ is nonnegative, aperiodic, and the submatrix of $\mathcal{M}_{\theta}$ corresponding to $u \in \cup_k\{j|\phi^k(j)>0\}$ is irreducible with rest of the rows and columns $= 0$. Hence $\mathcal{M}_{\theta}$ has a Perron-Frobenius eigenvalue $\gamma_{\theta}>0$ and associated non-negative eigenvector $Y_{\theta}$ such that $Y_{\theta}(u)>0 \ \forall u$. For $\delta_1<\gamma_{\theta}$ we have the following result.
\begin{thm}
For $\theta_n \equiv \theta$, the iterate sequence $r_n,n\geq 0$ in \eqref{critic-r} converges to $r_{\theta}$ a.s.\ as $n\rightarrow\infty$ such that $\gamma_\theta = \phi(i_0)^Tr_{\theta}$ and 
$Y_{\theta}=\sqrt{D_{\theta}}\Phi r_{\theta}$, respectively.
\end{thm}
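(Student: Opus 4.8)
The plan is to reduce the stochastic recursion \eqref{critic-r} to a limiting ODE whose unique globally asymptotically stable equilibrium is $r_\theta$, and then invoke the ODE method for stochastic approximation. First I would write $B_n^{-1}A_n=(\tfrac1n B_n)^{-1}(\tfrac1n A_n)$ and apply the ergodic theorem for the irreducible aperiodic chain $\{(i_n,z_n,i_{n+1})\}$ under $\pi_\theta$ (Assumption~\ref{assum1}): $\tfrac1n B_n\to\Phi^TD_\theta\Phi$ and $\tfrac1n A_n\to\Phi^TD_\theta Q_\theta\Phi$ almost surely, so that $B_n^{-1}A_n\to L_\theta:=(\Phi^TD_\theta\Phi)^{-1}\Phi^TD_\theta Q_\theta\Phi$ a.s. Setting $\varepsilon_n:=(B_n^{-1}A_n-L_\theta)r_n/(\phi(i_0)^Tr_n\vee\delta_1)$, which tends to $0$ on the event that $\{r_n\}$ stays bounded, recursion \eqref{critic-r} becomes a stochastic approximation with an asymptotically vanishing perturbation that tracks the ODE
\[
\dot r=\Big(\frac{L_\theta}{\phi(i_0)^Tr\vee\delta_1}-I\Big)r.
\]

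Next I would identify the equilibrium. Under Assumption~\ref{assum3} the columns of $\Phi$ are orthogonal and lie in the positive cone, hence have pairwise disjoint supports; consequently $\Phi^TD_\theta\Phi$ is diagonal with positive entries, $\Phi$ has full column rank, and $L_\theta\ge0$ entrywise. A direct computation gives the intertwining relation $\mathcal{M}_\theta\sqrt{D_\theta}\Phi=\sqrt{D_\theta}\Phi L_\theta$, so $\sqrt{D_\theta}\Phi$ carries each eigenvector of $L_\theta$ to an eigenvector of $\mathcal{M}_\theta$ with the same eigenvalue. Let $v_\theta>0$ be the Perron eigenvector of the primitive matrix $L_\theta$ (irreducibility and aperiodicity of $L_\theta$ following from Assumption~\ref{assum3}), normalized by $\phi(i_0)^Tv_\theta=1$, with Perron root $\beta_\theta$. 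Then $\sqrt{D_\theta}\Phi v_\theta$ is a nonnegative eigenvector of $\mathcal{M}_\theta$ supported exactly on $\cup_k\{j:\phi^k(j)>0\}$, so by Perron--Frobenius applied to that irreducible aperiodic block it must equal $Y_\theta$ up to scale, forcing $\beta_\theta=\gamma_\theta$. Setting $r_\theta:=\gamma_\theta v_\theta$ then gives a fixed point of the ODE in the regime $\phi(i_0)^Tr>\delta_1$ (using $\delta_1<\gamma_\theta$), with $\phi(i_0)^Tr_\theta=\gamma_\theta$ and $Y_\theta=\sqrt{D_\theta}\Phi r_\theta$, exactly as claimed.

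The crux, and what I expect to be the main obstacle, is the global asymptotic stability of $r_\theta$, owing to the nonlinearity and truncation in $\phi(i_0)^Tr\vee\delta_1$. My approach is to decompose $r=\sum_k\beta_kv_k$ in a basis adapted to $L_\theta$, with $v_1=v_\theta$ the strictly dominant Perron direction; the ODE then separates as $\dot\beta_k=(\lambda_k/s(r)-1)\beta_k$, where $s(r)=\phi(i_0)^Tr\vee\delta_1>0$ and $\lambda_k$ are the eigenvalues of $L_\theta$. For the ratios $\eta_k=\beta_k/\beta_1$ one gets $\dot\eta_k=\eta_k(\lambda_k-\gamma_\theta)/s(r)$, and since $\mathrm{Re}(\lambda_k)<\gamma_\theta$ for $k\ge2$ by strict dominance of the Perron root while $s(r)$ stays bounded, the direction of $r$ collapses onto $v_\theta$. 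Once the direction has converged, $s(r)\to\beta_1$ and the scalar obeys $\dot\beta_1\approx\gamma_\theta-\beta_1$ in the regime $\beta_1>\delta_1$, driving $\beta_1\to\gamma_\theta$, so that $r\to\gamma_\theta v_\theta=r_\theta$; the hypothesis $\delta_1<\gamma_\theta$ is exactly what keeps the truncation inactive at the limit. The delicate points are handling a possibly non-diagonalizable $L_\theta$ (where I would pass to a real Jordan form and absorb the off-diagonal contributions using the spectral gap) and ruling out the unstable invariant set $\{\beta_1=0\}$, for which I would use the cone structure of the power iteration together with the avoidance-of-unstable-equilibria property of the perturbed recursion.

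Finally I would establish a.s.\ boundedness of $\{r_n\}$, so that $\varepsilon_n\to0$ and the ODE method applies. For this I would verify the Borkar--Meyn scaling criterion: the key computation is that $h(cr)/c\to-r$ as $c\to\infty$ whenever $\phi(i_0)^Tr>0$, while on $\{\phi(i_0)^Tr\le0\}$ the denominator is pinned at $\delta_1$ and the only unstable direction of $L_\theta/\delta_1$ is the positive Perron vector, which points into $\{\phi(i_0)^Tr>0\}$; together these give the origin as the globally asymptotically stable equilibrium of the scaling-limit ODE, whence boundedness of the iterates follows. Combining boundedness (so $\varepsilon_n\to0$), the a.s.\ convergence $B_n^{-1}A_n\to L_\theta$, and the global asymptotic stability of $r_\theta$, the ODE method for stochastic approximation (Chapter~2 of \cite{BorkarB}) yields $r_n\to r_\theta$ a.s., with $\phi(i_0)^Tr_\theta=\gamma_\theta$ and $Y_\theta=\sqrt{D_\theta}\Phi r_\theta$.
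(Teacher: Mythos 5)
Your overall scaffolding — ergodic averaging to get $\tfrac1n B_n\to\Phi^TD_\theta\Phi$ and $\tfrac1n A_n\to\Phi^TD_\theta Q_\theta\Phi$, the limiting ODE $\dot r=\bigl(\tfrac{L_\theta}{\phi(i_0)^Tr\vee\delta_1}-I\bigr)r$, and the identification of the equilibrium via the intertwining relation $\mathcal{M}_\theta\sqrt{D_\theta}\Phi=\sqrt{D_\theta}\Phi L_\theta$ together with Perron--Frobenius (so that $\beta_\theta=\gamma_\theta$, $r_\theta=\gamma_\theta v_\theta$, $\phi(i_0)^Tr_\theta=\gamma_\theta$, and $\delta_1<\gamma_\theta$ keeps the truncation inactive at the limit) — is correct and is essentially the analysis the paper outsources wholesale, since its entire proof is a citation to Section 5 of \cite{BasuA}.

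The genuine gap is in your stability and boundedness argument. First, the Borkar--Meyn verification fails as you state it: your scaling limit is $h_\infty(r)=-r$ on $\{\phi(i_0)^Tr>0\}$ but $h_\infty(r)=(L_\theta/\delta_1-I)r$ on $\{\phi(i_0)^Tr\le 0\}$, which is discontinuous across the hyperplane (and the convergence $h(cr)/c\to h_\infty(r)$ is not uniform on compacts meeting it), so Theorems 2.1--2.2 of \cite{BorkarE} do not apply as stated. Second, and more seriously, the origin is \emph{not} globally asymptotically stable for this limit dynamics: the unstable eigenspace of $L_\theta/\delta_1-I$ is the span of $v_\theta$, not just its positive half. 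Along the ray $r=-cv_\theta$, $c>0$, one has $\phi(i_0)^Tr=-c\,\phi(i_0)^Tv_\theta<0$ for all time, the denominator stays pinned at $\delta_1$, and $\dot r=(\gamma_\theta/\delta_1-1)r$ with $\gamma_\theta>\delta_1$ gives an invariant escape ray to infinity — directly contradicting your claim that the only unstable direction ``points into $\{\phi(i_0)^Tr>0\}$''. The missing structural idea, which is what the analysis in \cite{BasuA} actually exploits, is cone invariance: under Assumption~\ref{assum3} the columns of $\Phi$ have pairwise disjoint supports, so $B_n$ is \emph{diagonal} and $B_n^{-1}A_n\ge 0$ entrywise; hence for $r_0\ge 0$ componentwise (and $a(n)\le 1$), recursion \eqref{critic-r} is a convex combination of $r_n$ and a nonnegative vector and the iterates never leave the nonnegative orthant. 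All the stability analysis can then be confined to the cone, where the $-v_\theta$ pathology cannot arise, and where $u^Tr>0$ for the left Perron vector $u>0$ of $L_\theta$ holds automatically for any nonzero cone point — which also disposes of your delicate set $\{\beta_1=0\}$ without invoking avoidance-of-unstable-equilibria results, which in any case require noise nondegeneracy conditions that are neither assumed in the paper nor verified in your sketch.
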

\begin{proof}
See Section 5 of \cite{BasuA}. For existence of such $r_\theta$, see \cite{BasuA}.
\end{proof}

\begin{thm}\label{thm2}
For $\theta_n \equiv \theta$, the iterate sequence  $\tilde{W}_n(i_0), n\geq 0$ in \eqref{avg-cost-td} converges to $\tilde{G}(\theta)$ in \eqref{modified-avg} almost surely. 
\end{thm}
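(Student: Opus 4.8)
The plan is to recognize \eqref{avg-cost-td} as a relative value iteration (RVI) scheme of the type analyzed in \cite{Abounadi}, but applied to the \emph{twisted} chain $\tilde{p}_{\pi}$, the twisting being induced implicitly by the importance-sampling ratio $\rho_n$ rather than by sampling from $\tilde{p}_{\pi}$ directly. First I would invoke the two-timescale separation: since $b(n)=o(a(n))$, on the timescale of the $b(n)$-recursion the faster critic is quasi-equilibrated, so $V_n\to V_\pi$ and $V_n(i_0)\to\lambda_\pi$, whence $g_n\to g$ and $\rho_n\to\rho$. I would then rewrite \eqref{avg-cost-td} with $g,\rho$ in place of $g_n,\rho_n$, absorbing the differences into error terms $(g_n-g)$ and $(\rho_n-\rho)\tilde{W}_n$ that vanish asymptotically (the second requiring boundedness of $\tilde{W}_n$, addressed below). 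Since the recursion is linear in $\tilde{W}_n$ for fixed $\rho$, it decouples across the $x_1$ coordinates, so it suffices to treat the scalar case.

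The conceptual heart of the argument is the identification of the limiting ODE. Conditioning on $i_n=i$ and using the definition of $\rho$ together with the eigen-relation \eqref{egeq}, one finds that the mean one-step drift of the scalarized recursion is
\[
\bar{g}(i)+\sum_{j}\tilde{p}_{\pi}(i,j)\tilde{W}(j)-\tilde{W}(i)-\tilde{W}(i_0),\qquad \bar{g}(i):=\sum_{a}\pi(i,a)\sum_{j}p(i,a,j)g(i,a,j),
\]
because $\sum_{a}\pi(i,a)\sum_{j}p(i,a,j)\rho(i,a,j)\tilde{W}(j)=\sum_{j}\tilde{p}_{\pi}(i,j)\tilde{W}(j)$ by \eqref{egeq}. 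This is exactly where importance sampling converts samples drawn under the nominal kernel $p$ into the statistics of the twisted kernel $\tilde{p}_{\pi}$, and I would emphasize this conversion as the crux. Because the updates are asynchronous (governed by $\nu(i_n,n)$), I would then appeal to the asynchronous stochastic-approximation framework of \cite{BorkarAsyn, BorkarB}: Assumption~\ref{assum1} guarantees that every component is updated comparably often, so the limiting ODE is the RVI ODE $\dot{\tilde{W}}=T(\tilde{W})-\tilde{W}$ with $T(\tilde{W})(i)=\bar{g}(i)+\sum_{j}\tilde{p}_{\pi}(i,j)\tilde{W}(j)-\tilde{W}(i_0)$, the per-component time scaling being immaterial since the equilibrium is a componentwise zero of the drift.

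Next I would analyze this ODE along the lines of \cite{Abounadi}. Since $\tilde{p}_{\pi}$ is a genuine irreducible, aperiodic stochastic matrix (its row sums equal $1$ by \eqref{egeq}), the Poisson equation $\tilde{W}(i)+\tilde{G}=\bar{g}(i)+\sum_{j}\tilde{p}_{\pi}(i,j)\tilde{W}(j)$ has a solution unique up to an additive constant, with $\tilde{G}=\sum_{i}\tilde{d}_{\pi}(i)\bar{g}(i)=\tilde{G}(\pi)$ the average cost of $\tilde{p}_{\pi}$ under per-stage cost $\bar{g}$, matching \eqref{modified-avg}. A fixed point of $T$ must satisfy this Poisson equation together with the normalization $\tilde{W}(i_0)=\tilde{G}$; this pins down a unique fixed point $\tilde{W}^{\ast}$ with $\tilde{W}^{\ast}(i_0)=\tilde{G}(\pi)$. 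The reference functional $\tilde{W}\mapsto\tilde{W}(i_0)$ satisfies the conditions imposed in \cite{Abounadi}, so the global asymptotic stability of $\tilde{W}^{\ast}$ for the RVI ODE follows from the monotonicity/nonexpansivity arguments there, which I would cite rather than reprove.

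Finally, I would close the loop by establishing almost-sure boundedness of $\{\tilde{W}_n\}$ via the Borkar--Meyn stability criterion applied to the scaled ODE $\dot{x}=\tilde{p}_{\pi}x-x(i_0)\mathbf{1}-x$, whose origin is the unique globally stable point under the same structural conditions invoked above; this boundedness simultaneously validates the vanishing of the importance-sampling error term $(\rho_n-\rho)\tilde{W}_n$ and legitimizes the ODE method. The two-timescale asynchronous stochastic-approximation convergence theorem with a globally asymptotically stable limiting ODE then gives $\tilde{W}_n\to\tilde{W}^{\ast}$ a.s., and in particular $\tilde{W}_n(i_0)\to\tilde{G}(\pi)$. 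The hard part will be the boundedness/stability step in the presence of the random importance weights $\rho_n$ and asynchronous sampling; once boundedness is secured, the remainder reduces to the cited RVI and asynchronous-SA machinery.
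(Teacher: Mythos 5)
Your proposal is correct and takes essentially the same route as the paper's proof: both rewrite \eqref{avg-cost-td} as an asynchronous stochastic approximation whose mean drift, via the identity $\sum_{a}\pi(i,a)p(i,a,j)\rho(i,a,j)=\tilde{p}_{\pi}(i,j)$, becomes the relative-value-iteration operator for the twisted kernel, establish stability through the Borkar--Meyn criterion \cite{BorkarE} using the scaled linear ODE with matrix $\tilde{p}_{\pi}-I-E$ (the paper's $R_{\theta}-I-E$), and identify the limit by the fixed-point characterization forcing $\tilde{W}(i_0)=\tilde{G}(\theta)$. The only minor deviations are that you additionally treat the time-varying $g_n,\rho_n$ (which the paper handles separately as an ``easy extension'' of Theorem~\ref{thm2}, whose own proof fixes $g,\rho$) and that you cite the nonexpansivity arguments of \cite{Abounadi} for global asymptotic stability where the paper argues directly that the affine ODE's matrix is Hurwitz.
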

\begin{proof}
Let $e$ be the vector of all ones. Define the Bellman operator $T^A_{\pi}:\mathbb{R}^{|S|} \rightarrow \mathbb{R}^{|S|}$ for average cost as follows:
\[
\begin{split}
T^A_{\pi}\tilde{W}(i)=\sum_{a \in A}\pi(i,a)\sum_{j \in S}p(i,a,j)\big(g(i,a,j)\\
+\rho(i,a,j)\tilde{W}(j)\big),i \in S. 
\end{split}
\]
Then the Bellman equation $\tilde{W}=T^A_{\pi}\tilde{W}-\lambda' e$, has a unique solution up to an additive constant when $\lambda'=\tilde{G}(\theta)$. The solution is $\infty$ when $\lambda'>\tilde{G}(\theta)$ and $-\infty$ when $\lambda'<\tilde{G}(\theta)$. Therefore, the above Bellman equation has a solution only when $\lambda'=\tilde{G}(\theta)$, and so the Bellman equation $\tilde{W}=T^A_{\pi}\tilde{W}-\tilde{W}(i_0)e$ has a solution only when $\tilde{W}(i_0)=\tilde{G}(\theta)$. Let us define the function  $f(\tilde{W})=T^A_{\pi}\tilde{W}-\tilde{W}(i_0)e-\tilde{W}$. It is easy to see that $f(\cdot)$ is Lipschitz continuous. The recursion \eqref{avg-cost-td} can be rewritten as:
\begin{equation}
    \label{eqn}
\tilde{W}_{n+1}(i_n)=\tilde{W}_{n}(i_n)+b(\nu(i_n,n))\big[f(\tilde{W}_n)(i_n)+M_{n+1}(i_n)\big],
\end{equation}
where $M_{n+1}(i_n), n\geq 0$ is a martingale difference sequence w.r.t.\ the increasing  sigma fields $\mathcal{F}_n=\sigma(\tilde{W}_m,M_m(i_m),i_m,m \leq n)$, $n\geq 0$. Since $\tilde{W}(n)$ is $\mathcal{F}_n$-measurable and all other quantities are bounded, it is easy to see that for some constant $K>0$,
\[
\mathbb{E}[||M_{n+1}(i_n)||^2|\mathcal{F}_n] \leq K(1+||\tilde{W}_n||^2).
\]
By Assumption~\ref{assum1}, all the states $i \in S$ are visited infinitely often. Let $R_\theta$ be the modified probability transition matrix with elements $R_{\theta}(i,j)=\sum_{a \in A}
\pi(i,a)\rho(i,a,j) p(i,a,j)$ for $i,j \in S$. Let $E$ be a $|S|\times |S|$ matrix with the $i_0$th column as $e$ and all other columns having $0$ as entries.
We now verify the conditions in \cite{BorkarE} to prove the stability and convergence of the recursion. 
It then follows from Chapter 6.4 (1) of \cite{BorkarB}
that the ODE associated with (\ref{eqn}) is $\dot{\tilde{W}}(t) =\frac{1}{|S|} f(\tilde{W}(t))$.
 Next define
\[
f_{\infty}(\tilde{W})=\lim_{c \rightarrow \infty}\frac{f(c\tilde{W})}{c}=R_{\theta} \tilde{W} -\tilde{W}-E\tilde{W}=(R_{\theta}-I-E)\tilde{W}. 
\]

Since $R_\theta$ is a non-negative irreducible probability matrix, by the Perron-Frobenius theorem it has principal eigenvalue $1$ corresponding to right eigenvector $e$. Hence $R_{\theta}-I$ has $0$ as the eigenvalue with highest real part corresponding to the right eigenvector $e$. $E$ has the maximum eigenvalue as $1$ corresponding to the eigenvector $e$, and all other eigenvalues are $0$. Therefore the matrix $R_{\theta}-I-E$ has all eigenvalues with negative real parts. 
Thus, the ODE $\dot{\tilde{W}}(t)=\mathbb{L}f_{\infty}(\tilde{W}(t))$ has the origin as its globally asymptotically stable equilibrium. Hence (A1)-(A2) of \cite{BorkarE} are satisfied and the claim follows from Theorems 2.1-2.2 of \cite{BorkarE}.
\end{proof}

Let us now define the importance sampling ratio using $r_{\theta}$:
\begin{equation}\label{is-ratio}
g^1(i,a,j)=\frac{e^{\alpha c(i,a,j)}r_{\theta}^T\phi(j)}{(r_{\theta}^T\phi(i)r_{\theta}^T\phi(i_0))\vee \delta_2}
\end{equation}
Let $\Psi\in \mathbb{R}^{|S|\times x_3}$ be the feature matrix with rows $\psi(i)^T,i \in S$. Let $\tilde{R}_{\theta}$ be the matrix with elements 
$$\tilde{R}_{\theta}(i,j)=\sum_{a \in A}\pi(i,a)p(i,a,j)g^1(i,a,j), \ i,j \in S.$$ 
Let $B^1_{\theta}$ be a matrix with rows 
$$B^1_{\theta}(i)=\sum_{a \in A}\nabla_{\theta}\pi(i,a)^T\sum_{j \in S}(g^1(i,a,j)-1)p(i,a,j),i \in S.$$ 
Let us define the following matrices:
\[
\begin{split}
&C^1=\Psi^T D_{\theta}\Psi,\\
&A^1=\Psi^T D_{\theta}(\tilde{R}_{\theta}-I-E)\Psi,\\
&b^1=\Psi^T D_{\theta} B^1_{\theta}.
\end{split}
\]
\begin{thm}\label{thm3}
For $\theta_n \equiv \theta$, assuming $A^1$ and $C^1$ are non-singular matrices, the iterates $\tilde{w}_n$, $n\geq 0$, in \eqref{critic-2} converge a.s. to $({A^1}^{-1}b^1)^T$.
\end{thm}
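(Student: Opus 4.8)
The plan is to analyze \eqref{critic-2} by the ODE method for stochastic approximation, exploiting the timescale separation and the GTD2 structure of the recursion. First I would invoke the multi-timescale reduction. Since $c(n)=o(b(n))$ and $b(n)=o(a(n))$, on the timescale $b(n)$ of the $(u_n,\tilde{w}_n)$ recursion the actor is quasi-static, $\theta_n\equiv\theta$, while the faster critic $r_n$, whose recursion \eqref{critic-r} does not depend on $(u_n,\tilde{w}_n)$, has essentially equilibrated to $r_\theta$ (cf.\ \cite{BasuA} and the first theorem of this section). Consequently the truncated ratio $\tilde{\rho}(i_n,z_n,i_{n+1})$ may be replaced by $g^1(i_n,z_n,i_{n+1})$ of \eqref{is-ratio}, which is bounded thanks to the $\vee\,\delta_2$ truncation and the finiteness of $S$ and $A$. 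Both components of \eqref{critic-2} share the step-size $b(n)$, so $(u_n,\tilde{w}_n)$ is a single-timescale linear recursion driven by the ergodic chain $\{X_n\}$ under $P_\theta$.

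Next I would put \eqref{critic-2} in the standard form $y_{n+1}=y_n+b(n)(h(y_n)+M_{n+1}+\varepsilon_n)$, where $y_n$ collects $(u_n,\tilde{w}_n)$, $M_{n+1}$ is a martingale difference with respect to $\mathcal{F}_n$, and $\varepsilon_n\to 0$ absorbs the error from $r_n\to r_\theta$. Computing the conditional mean of the increments under the stationary law $d_\theta$ (using $\pi\nabla_\theta\log\pi=\nabla_\theta\pi$ and the definitions of $\tilde{R}_\theta$, $B^1_\theta$, $C^1$, $A^1$, $b^1$) gives the mean field
\[
\dot{u}=(b^1)^T+\tilde{w}(A^1)^T-uC^1,\qquad \dot{\tilde{w}}=-uA^1 .
\]
These dynamics act on $u,\tilde{w}$ by right multiplication by $x_3\times x_3$ matrices, so the $x_1$ rows decouple; transposing one row to a column $y=(p,s)\in\mathbb{R}^{2x_3}$ yields $\dot{y}=Gy+(\beta,0)^T$ with $G=\begin{pmatrix}-C^1 & A^1\\ -(A^1)^T & 0\end{pmatrix}$. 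Setting the right-hand sides to zero and using non-singularity of $A^1$ produces the unique equilibrium $u=0$, $\tilde{w}=((A^1)^{-1}b^1)^T$, the point named in the statement (the overall sign being a matter of the bookkeeping in the definition of $b^1$). The martingale term satisfies $\mathbb{E}[\|M_{n+1}\|^2\mid\mathcal{F}_n]\le K(1+\|y_n\|^2)$ exactly as in Theorem~\ref{thm2}, and the Markov noise is controlled by the geometric ergodicity of the finite, irreducible, aperiodic chain.

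The crux is the stability of this linear ODE, i.e.\ that $G$ is Hurwitz; this is precisely where the GTD2 construction earns its convergence, in contrast to naive off-policy TD which can diverge (cf.\ \cite{baird1995}). Here $C^1=\Psi^T D_\theta\Psi$ is symmetric positive definite, being positive semidefinite and, by hypothesis, non-singular. For an eigenpair $G(x,y)^T=\mu(x,y)^T$ with $\mu\neq 0$, eliminating $y=-\mu^{-1}(A^1)^T x$ and pairing with $x^*$ gives $s_0\mu^2+p_0\mu+q_0=0$ with $s_0=\|x\|^2>0$, $p_0=x^*C^1 x>0$, and $q_0=\|(A^1)^T x\|^2$; non-singularity of $A^1$ forces $q_0>0$ for $x\neq 0$, so both roots have strictly negative real part, while $\mu=0$ is excluded because $A^1$ is invertible. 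Hence every eigenvalue of $G$ has negative real part, the origin is the globally asymptotically stable equilibrium of the homogeneous part, and the affine ODE has the equilibrium above as its unique globally asymptotically stable point. I expect this saddle-point stability argument to be the main obstacle.

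Finally, taking $h_\infty(y)=\lim_{c\to\infty}h(cy)/c=Gy$, whose origin is globally asymptotically stable because $G$ is Hurwitz, supplies the Borkar–Meyn stability criterion. Invoking Theorems~2.1–2.2 of \cite{BorkarE} together with Chapters~2 and~6 of \cite{BorkarB} (and the GTD2 framework of \cite{sutton2009}) then shows that the iterates track the ODE, remain a.s.\ bounded, and converge a.s.\ to its unique equilibrium, i.e.\ $y_n\to(0,((A^1)^{-1}b^1)^T)$, whence $\tilde{w}_n\to((A^1)^{-1}b^1)^T$ a.s., proving the claim.
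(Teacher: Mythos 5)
Your proposal is correct and follows essentially the same route as the paper's proof: freeze $\theta$ and $r_\theta$ by timescale separation, replace $\tilde{\rho}$ by $g^1$ with an $o(1)$ error, cast \eqref{critic-2} as a linear stochastic approximation with martingale-difference plus vanishing Markov noise, show the saddle-point matrix $\begin{bmatrix}-C^1&-A^1\\{A^1}^T&0\end{bmatrix}$ is Hurwitz using positive definiteness of $C^1$ and non-singularity of $A^1$, and conclude boundedness and convergence to the unique equilibrium $u=0$, $\tilde{w}=({A^1}^{-1}b^1)^T$ via the Borkar--Meyn criterion. The only cosmetic deviations are your sign bookkeeping in the mean field (the correct signs are $\dot{u}=(b^1)^T-\tilde{w}(A^1)^T-uC^1$, $\dot{\tilde{w}}=uA^1$, which yield exactly the stated limit) and your proof of the Hurwitz property via the characteristic quadratic $s_0\mu^2+p_0\mu+q_0=0$ after eliminating the second block, where the paper instead computes $\mathrm{Re}(\lambda)=-v_1^*C^1v_1<0$ directly from the Rayleigh quotient; both are equivalent spectral arguments.
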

\begin{proof}
Let $U_n=[u_n,\tilde{w}_n]^T$ in recursion \eqref{critic-2}. Then the recursion can be written as,
\begin{equation}
\begin{split}\label{critic-2.1}
U_{n+1}=U_n+b(n)\big(\begin{bmatrix}-C^1_n&-A^1_n\\{A^1_n}^T&0\end{bmatrix}U_n+\begin{bmatrix}b^1_n\\0\end{bmatrix}\big)+\epsilon_1^{\theta}
\end{split}
\end{equation}
where,
\[
\begin{split}
&C^1_n=\psi(i_n)\psi(i_n)^T,\\
&A^1_n=\psi(i_n)(\psi(i_n)+\psi(i_0)-g^1(i_n,z_n,i_{n+1})\psi(i_{n+1}))^T,\\
&b^1_n=(g^1(i_n,z_n,i_{n+1})-1)\psi(i_n)\nabla_{\theta}\log{\pi(i_n,z_n)}^T,
\end{split}
\]
and $\epsilon_1^{\theta} = o(1)$ by Theorem 1.
Let us now define 
\[
f^1(U_n)=\begin{bmatrix}-C^1&-A^1\\{A^1}^T&0\end{bmatrix}U_n+\begin{bmatrix}b^1\\0\end{bmatrix}.
\]
Then, 
$f^1$ is linear in $U_n$ and hence Lipschitz continuous. The recursion \eqref{critic-2.1} can be rewritten as:
\begin{equation}
    \label{Ueq}
U_{n+1} = U_n + b(n)\Big(f^1(U_n)+\epsilon^1_n+M^1_{n+1}\Big), 
\end{equation}
where the noise term $M_{n+1}^1$ is defined as:
\[
\begin{split}
M_{n+1}^1=&\begin{bmatrix}-C^1_n&-A^1_n\\{A^1_n}^T&0\end{bmatrix}U_n+\begin{bmatrix}b^1_n\\0\end{bmatrix}\\
&-\mathbb{E}\Big[\begin{bmatrix}-C^1_n&-A^1_n\\{A^1_n}^T&0\end{bmatrix}U_n+\begin{bmatrix}b^1_n\\0\end{bmatrix}\Big|\mathcal{F}^1_n\Big], \mbox{ }\forall n,
\end{split}
\]
where the sigma fields $\mathcal{F}^1_n$ are defined by 
$$\mathcal{F}^1_n= \sigma(u_m, \tilde{w}_m, i_m, a_m, m\leq n), \ n\geq 0.$$
Since $U_n$ is $\mathcal{F}^1_n$-measurable and all other quantities are bounded, it is easy to see that for some constant $K^1>0$,
\[
\mathbb{E}[\|M^1_{n+1}\|^2|\mathcal{F}^1_n] \leq K^1(1+\|U_n\|^2).
\]
In (\ref{Ueq}), the quantity $\epsilon^1_n$ then corresponds to
\[
\epsilon^1_n = \mathbb{E}\Big[\begin{bmatrix}-C^1_n&-A^1_n\\{A^1_n}^T&0\end{bmatrix}U_n+\begin{bmatrix}b^1_n\\0\end{bmatrix}\Big|\mathcal{F}^1_n\Big] - f^1(U_n),
\]
and constitutes the Markov noise component. Note that by ergodicity of the Markov chain, $\epsilon^1_n\rightarrow 0$ almost surely as $n\rightarrow\infty$. 
Now define,
\[
f^1_{\infty}(U)=\lim_{c \rightarrow \infty}\frac{f^1(cU)}{c}=G^1U,
\]
where $G^1=\begin{bmatrix}-C^1&-A^1\\{A^1}^T&0\end{bmatrix}$. We get $det(G^1)=det(A^1)^2\neq 0$. Then all eigenvalues of $G^1$ are nonzero. Let $\lambda \neq 0$ be an eigenvalue of $G^1$ and $v$ be the corresponding normalized eigenvector. Then $v^*G^1v=\lambda$ where $v^*$ is the complex conjugate of $v$. Let $v^T=(v_1^T,v_2^T)$, where $v_1,v_2 \in \mathbb{C}^{x_3}$. Then $Re(\lambda)=(v^*G^1v+(v^*G^1v)^*)/2=-v_1^*C^1v_1=-(\Psi v_1)^*D_{\theta}\Psi v_1\leq 0$. Since $C^1$ is non-singular, $\Psi$ has full rank. We see that $v_1\neq 0$, because if $v_1=0$, then $\lambda=v^*G^1v=0$, which is a contradiction. Therefore $(\Psi v_1)^*D_{\theta}\Psi v_1\neq 0$, hence $Re(\lambda)<0$. Hence the ODE $\dot{U}(t)=f^1_{\infty}(U(t))$ has the origin as its globally asymptotically stable equilibrium. 
By Theorem 8.3 of \cite{BorkarB}, $\sup_n \|\tilde{w}_n\|<\infty$ almost surely. The claim now follows from Theorem 8.2 - Corollary 8.1 of \cite{BorkarB}.
\end{proof}

We now define the directional derivative of $\Gamma(\cdot)$ at point $x$ along the direction $y$ as follows:
\[\bar{\Gamma}(x,y)=\lim_{\eta\downarrow 0}\Big(\frac{\Gamma(x+\eta y)-x}{\eta}\Big).\]
Let $\kappa\stackrel{\triangle}{=} \{\theta|\bar{\Gamma}(\theta,\nabla_{\theta}\log{\lambda_{\theta}})=0\}$. This is the set of $\theta$ such that the policy gradient in \eqref{PG} is 0 or at a boundary of the projection set of $\Gamma$, which means $\theta$ is at a bounded local optimum.  Let $\kappa^{\epsilon}$ be the $\epsilon$-neighborhood of $\kappa$.
\begin{thm}\label{thm4}
Given $\epsilon>0$, $\exists \delta>0$ such that if $\sup_{\theta}||\mathcal{E}^{\theta}||<\delta$ then the actor update in \eqref{actor-fa} converges almost surely to the set $\kappa^{\epsilon}$.
\end{thm}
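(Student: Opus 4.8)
The plan is to treat the actor recursion \eqref{actor-fa} as the slowest iterate of the three-timescale scheme and reduce its analysis to a perturbed projected ODE. Since $c(n)=o(b(n))=o(a(n))$, on the timescale of the actor the faster critic iterates are essentially equilibrated: by Theorem~\ref{thm3}, for the frozen value of $\theta$ one has $\tilde{w}_n\to \tilde{w}_\theta:=({A^1}^{-1}b^1)^T$, and hence $\tilde{w}_n\psi(i_0)\to \tilde{w}_\theta\psi(i_0)$. I would write this limiting quantity as
\[
\tilde{w}_\theta\psi(i_0)=\nabla_\theta\log\lambda_\theta+\mathcal{E}^{\theta},
\]
where $\mathcal{E}^{\theta}$ is precisely the function-approximation bias incurred by replacing the exact gradient \eqref{PG} by its feature-based GTD2 surrogate. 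Standard projected stochastic-approximation theory (the Kushner--Clark framework, or Chapters~2 and~5 of \cite{BorkarB}) combined with the two-timescale averaging already used in Theorems~\ref{thm2}--\ref{thm3} then shows that $\theta_n$ almost surely tracks the projected ODE $\dot\theta(t)=\bar\Gamma\big(\theta(t),-\nabla_\theta\log\lambda_{\theta(t)}-\mathcal{E}^{\theta(t)}\big)$ on the compact convex set $C$.

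Next I would analyze the \emph{unperturbed} flow $\dot\theta(t)=\bar\Gamma(\theta(t),-\nabla_\theta\log\lambda_{\theta(t)})$ using $V(\theta)=\log\lambda_\theta$ as a Lyapunov function. Under Assumption~\ref{assum2} the dominant Perron eigenvalue $\lambda_\theta$ of the irreducible matrix $Q_\theta$ is a smooth function of $\theta$ (analytic perturbation theory for a simple isolated eigenvalue), so $V$ is continuously differentiable. For a projected gradient flow one has
\[
\frac{d}{dt}V(\theta(t))=\big\langle \nabla_\theta\log\lambda_{\theta(t)},\,\bar\Gamma(\theta(t),-\nabla_\theta\log\lambda_{\theta(t)})\big\rangle\le 0,
\]
since $\bar\Gamma$ returns the admissible component of the descent direction and therefore makes a nonpositive inner product with $\nabla_\theta\log\lambda_\theta$; equality holds exactly on $\kappa$. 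This identifies $\kappa$ as the set of asymptotically stable equilibria of the unperturbed flow, with $V$ strictly decreasing off $\kappa$.

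Finally I would invoke a robustness result for asymptotically stable sets under small bounded perturbations. Because the bias is uniformly small, $\sup_\theta\|\mathcal{E}^{\theta}\|<\delta$, the tracked ODE is a $\delta$-perturbation of the gradient flow on the compact set $C$, so by continuity of the attracting set under perturbation (Hirsch-type arguments, or the perturbed-ODE results underlying \cite{BorkarB}) its trajectories enter and remain in a neighborhood of $\kappa$ that shrinks as $\delta\downarrow 0$. Concretely, given $\epsilon>0$ one chooses $\delta$ small enough that $V$ still decreases strictly along the perturbed flow outside $\kappa^{\epsilon}$: this is possible because the admissible projected direction, equivalently $\|\nabla_\theta\log\lambda_\theta\|$, is bounded below by a positive constant on the compact set $C\setminus\kappa^{\epsilon}$, so that $\langle\nabla_\theta\log\lambda_\theta,\bar\Gamma(\theta,-\nabla_\theta\log\lambda_\theta-\mathcal{E}^{\theta})\rangle<0$ there. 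Hence $\kappa^{\epsilon}$ is forward invariant and globally attracting for the perturbed ODE, and the tracking result of the first paragraph yields a.s.\ convergence of $\theta_n$ to $\kappa^{\epsilon}$.

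The main obstacle is the last step. Because $\mathcal{E}^{\theta}$ need not be the gradient of any function, the perturbed direction is not a true gradient, so $V$ is not an exact Lyapunov function for the perturbed flow and convergence to $\kappa$ itself cannot be claimed. The delicate point is making the $\delta$--$\epsilon$ correspondence precise: one must show that a uniformly small bias cannot overturn the strict descent of $V$ outside $\kappa^{\epsilon}$, which relies on the strict positivity of the admissible projected gradient direction on $C\setminus\kappa^{\epsilon}$ together with the Lipschitz/continuity behavior of $\bar\Gamma$ at the boundary of the projection region $C$.
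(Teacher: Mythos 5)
Your proposal matches the paper's proof in all essentials: the paper likewise rewrites \eqref{actor-fa} using the Theorem~\ref{thm3} limit $({A^1}^{-1}b^1)^T\psi(i_0)$, invokes Kushner--Clark (Theorem 5.3.1 of \cite{kushner1978}) so that $\theta_n$ tracks the projected ODE, views $\mathcal{E}^{\theta}$ as a uniformly small bounded perturbation of $\dot{\theta}=\bar{\Gamma}(\theta,\nabla_{\theta}\log{\lambda_{\theta}})$, and concludes convergence to $\kappa^{\epsilon}$ via Theorem 1 of \cite{hirsch1989}. The differences are only in emphasis: the paper devotes its effort to verifying the regularity you leave to ``standard theory'' --- continuity of ${A^1}^{-1}b^1$ in $\theta$ via Perron--Frobenius eigenvalue/eigenvector continuity, and Lipschitzness of $\nabla_{\theta}\log{\lambda_{\theta}}$ via \cite{Schweitzer1968}, \cite{deutsch1984}, \cite{deutsch1985} --- while it outsources to Hirsch's theorem precisely the Lyapunov/$\delta$--$\epsilon$ mechanics that you work out explicitly and correctly flag as the delicate step.
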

\begin{proof}
By Theorem~\ref{thm3}, the recursion \eqref{actor-fa} can be rewritten as:
\[
\theta_{n+1}=\Gamma(\theta_n-c(n)({A^1}^{-1}b^1)^T\psi(i_0)+\epsilon_2^{\theta_n}),
\]
where $\epsilon_2^{\theta_n} \rightarrow 0$. By Assumption~\ref{assum2} and Theorem 2 of \cite{Schweitzer1968}, $d_{\theta}$ is continuously differentiable in $\theta$. Since $\theta_n$ lies in a compact space, $\nabla_{\theta}\mathcal{M}_{\theta}$ is uniformly bounded, hence $\mathcal{M}_{\theta}$ is Lipschitz. Therefore its eigenvalue $\gamma_{\theta}$ is continuous in $\theta$. Let $A_n$ be a sequence of non-negative, irreducible and aperiodic matrices with principal eigenvalue $\lambda_n>0$ and eigenvector $v_n>0$ such that $v_n^T\phi(s_0)=\lambda_n$. Therefore $A_nv_n=\lambda_nv_n$. Now suppose $A_n \rightarrow A$ as $n\rightarrow\infty$. Then $\lambda_n \rightarrow \lambda$, where $\lambda$ is the principal eigenvalue of $A$. Thus, $v_n \rightarrow v$ where $v$ is the principal eigenvector of $A$ such that $v^T\phi(s_0)=\lambda$. Therefore, $g^1(i,a,j)$ in  \eqref{is-ratio} is continuous in $\theta$. Therefore by Assumption~\ref{assum2}, $\tilde{R}_{\theta}$ and $B^1_{\theta}$ are also continuous. Thus ${A^1}^{-1}b^1$ is continuous in $\theta$. Consider the associated ODEs:
\begin{equation}\label{risk_actor_ode1}
\dot{\theta}=\bar{\Gamma}(\theta,({A^1}^{-1}b^1)^T\psi(i_0))
\end{equation}
\begin{equation}\label{risk_actor_ode2}
\dot{\theta}=\bar{\Gamma}(\theta,\nabla_{\theta}\log{\lambda_{\theta}})
\end{equation}
Therefore by Theorem 5.3.1 of \cite{kushner1978}, \(\theta(n)\) asymptotically tracks the ODE \eqref{risk_actor_ode1}. By Assumption~\ref{assum2}, $Q_\theta$ is continuously differentiable. Hence by \cite{deutsch1984}, $\lambda_\theta$ is continuously differentiable and by \cite{deutsch1985}, $V_\theta$ is continuously differentiable. As $\theta_n$ lies in a compact space, $\lambda_\theta$ and $V_\theta(i)$ is bounded below, hence $\tilde{p}_\theta$ is continuously differentiable. Hence by \cite{Schweitzer1968}, $\tilde{d}_\theta$ is continously differentiable. Now by Assumption~\ref{assum2}, $\nabla_{\theta}\log{\lambda_{\theta}}$ is continuosly differentiable. Since $\theta_n$ lies in a compact space, $\nabla^2_{\theta}\log{\lambda_{\theta}}$ is uniformly bounded, hence $\nabla_{\theta}\log{\lambda_{\theta}}$ is Lipschitz. If $\sup_\theta||\mathcal{E}^{\theta}||<\delta$ then, $\theta(n)$ is a perturbed trajectory of \eqref{risk_actor_ode2}. Now $\kappa$ is the set of stable equilibria of \eqref{risk_actor_ode2}. The claim follows from Theorem 1 of \cite{hirsch1989}.
\end{proof}
Hence, by Theorem 1-4, the algorithm RSACFA converges asymptotically.
\section{Numerical Results}
\label{numerical}

\begin{figure*}
\centering
\includegraphics[width=1\textwidth]{./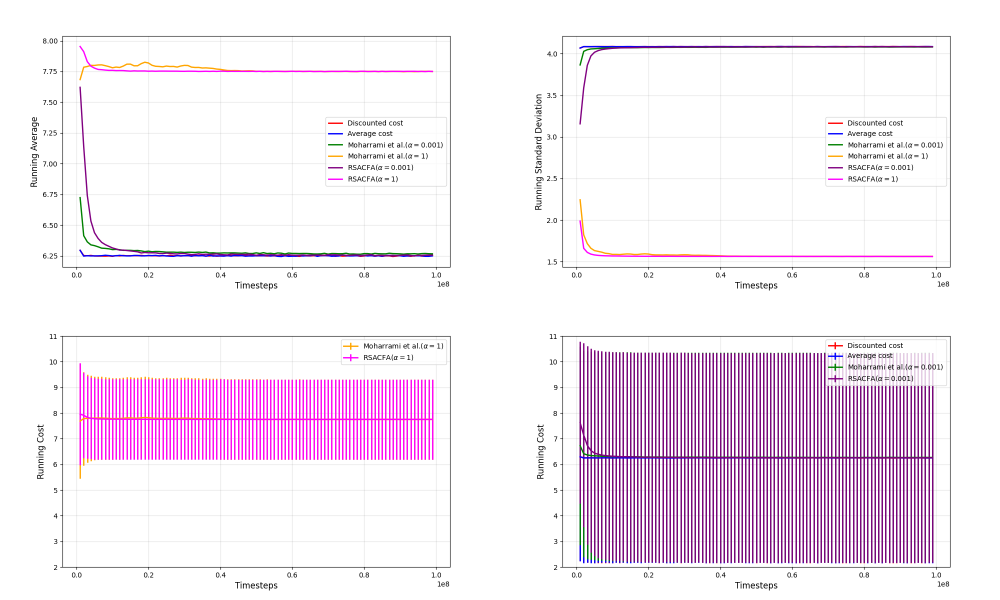}
\caption{State space size $|S|=9$, Action space size $|A|=9$}
\label{fig1}
\end{figure*}

\begin{figure*}
\centering
\includegraphics[width=1\textwidth]{./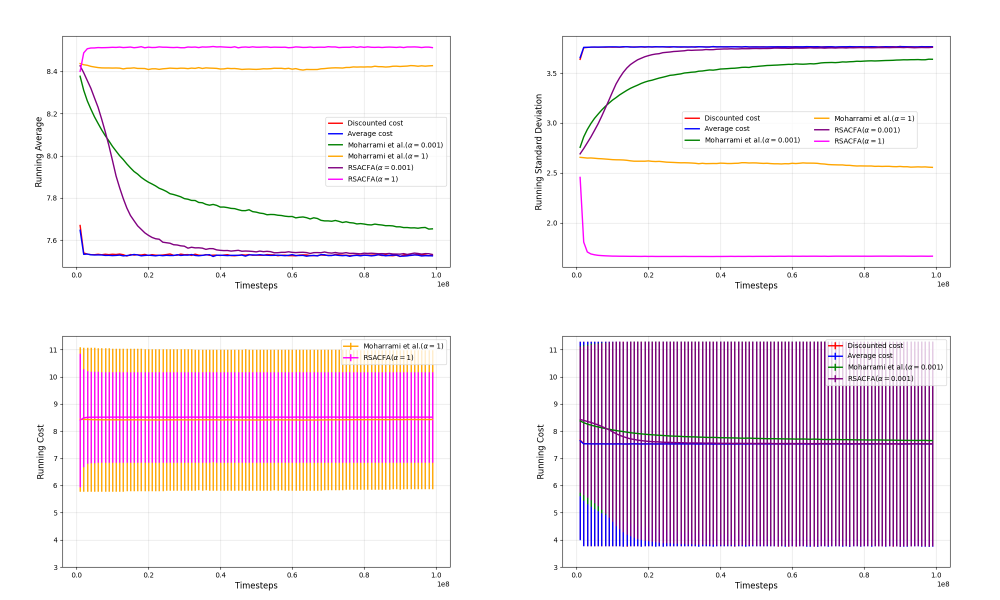}
\caption{State space size $|S|=100$, Action space size $|A|=9$}
\label{fig2}
\end{figure*}

{\small
\begin{table*}
\centering
\begin{tabular}{|p{0.8cm}|p{2.4cm}|p{3cm}|p{2.4cm}|p{3cm}|}
\hline
Exp. & RSACFA & Moharrami et al. & Average Cost & Discounted Cost\\
\hline
1 & $11045.49$ $\pm96.16$ & $20714.60$ $\pm163.69$ &$7538.60$ $\pm118.94$ & $7403.59$ $\pm23.44$\\
\hline
2 & $11613.45$ $\pm101.78$ & $20618.56$ $\pm139.81$ &$7464.00$ $\pm41.76$ & $7412.19$ $\pm66.92$\\
\hline
3 & $33152.86$ $\pm454.21$ & $49596.65$ $\pm181.00$ &$17894.83$ $\pm739.79$ & $17827.96$ $\pm389.16$\\
\hline
\end{tabular}
\caption{Mean and standard deviation($\mu \pm \sigma$) of computational time(in secs)\\ after \(10^8\) time-steps}
\label{table1}
\end{table*}
}

\begin{figure*}
\centering
\includegraphics[width=1\textwidth]{./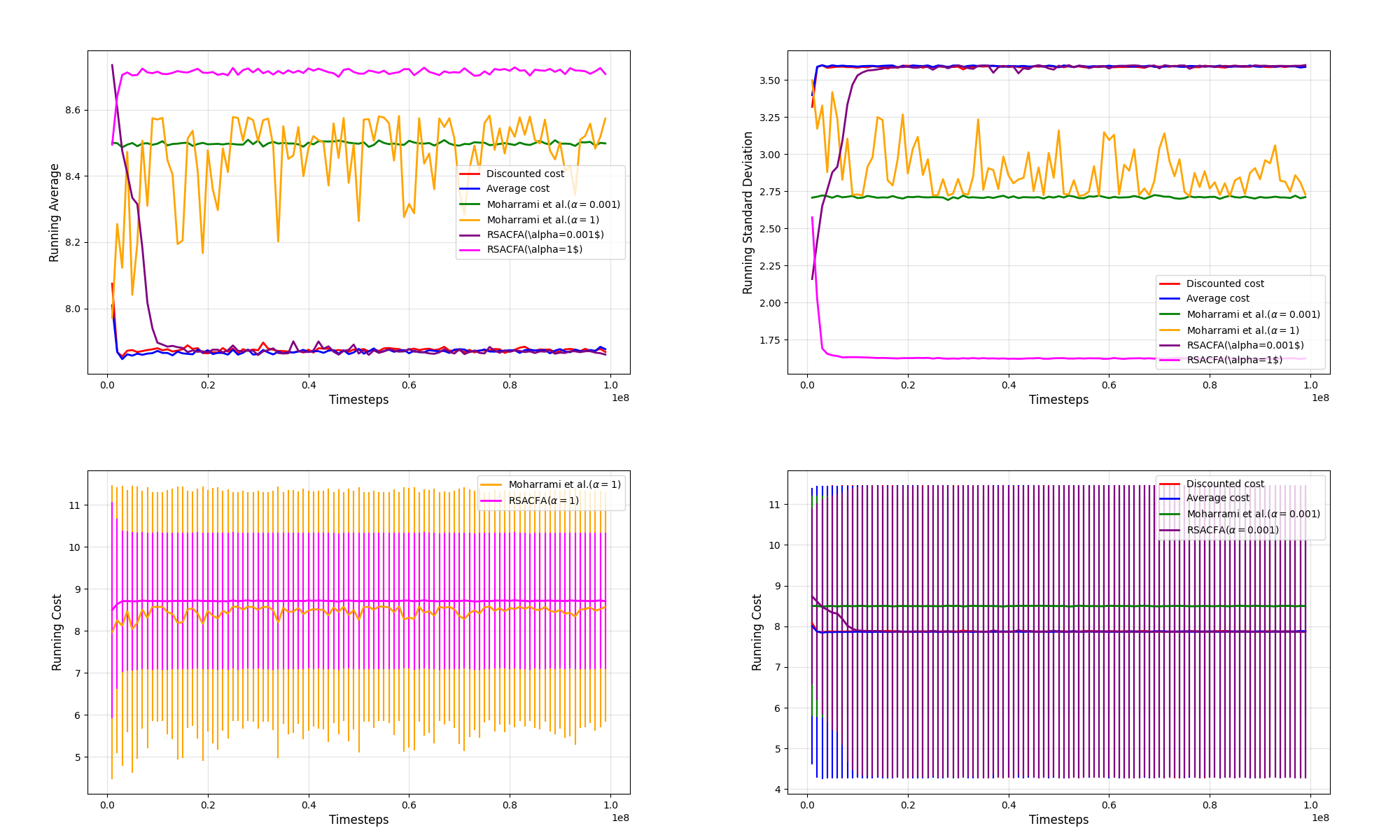}
\caption{State space size $|S|=10000$, Action space size $|A|=9$}
\label{fig3}
\end{figure*}

\begin{figure*}
\centering
\includegraphics[width=1\textwidth]{./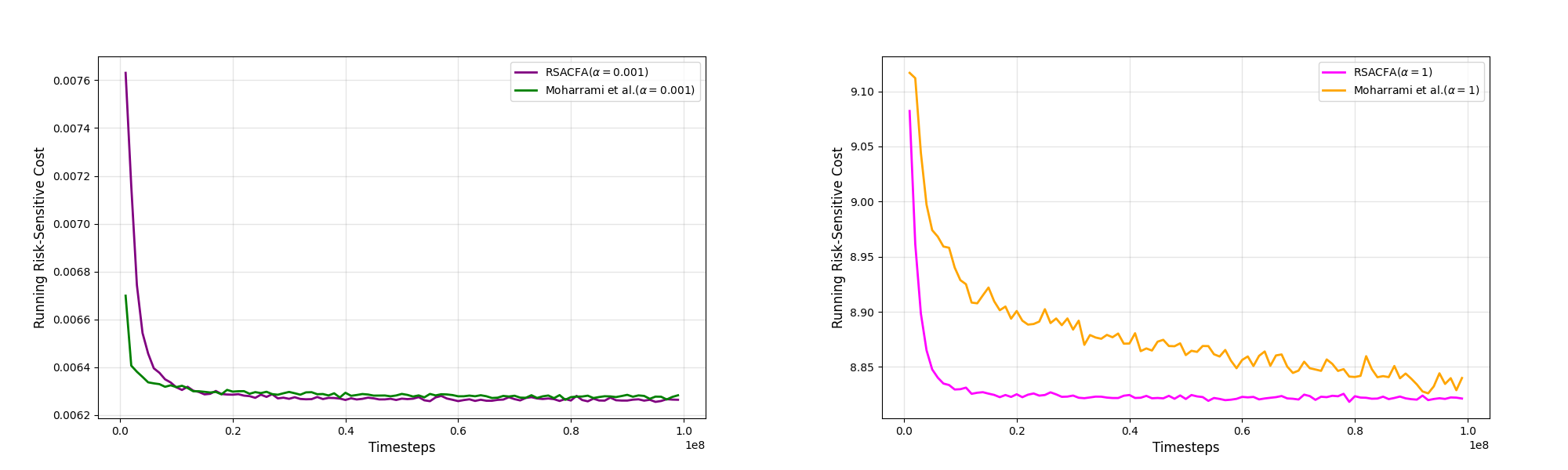}
\caption{State space size $|S|=9$, Action space size $|A|=9$}
\label{fig4}
\end{figure*}

\begin{figure*}
\centering
\includegraphics[width=1\textwidth]{./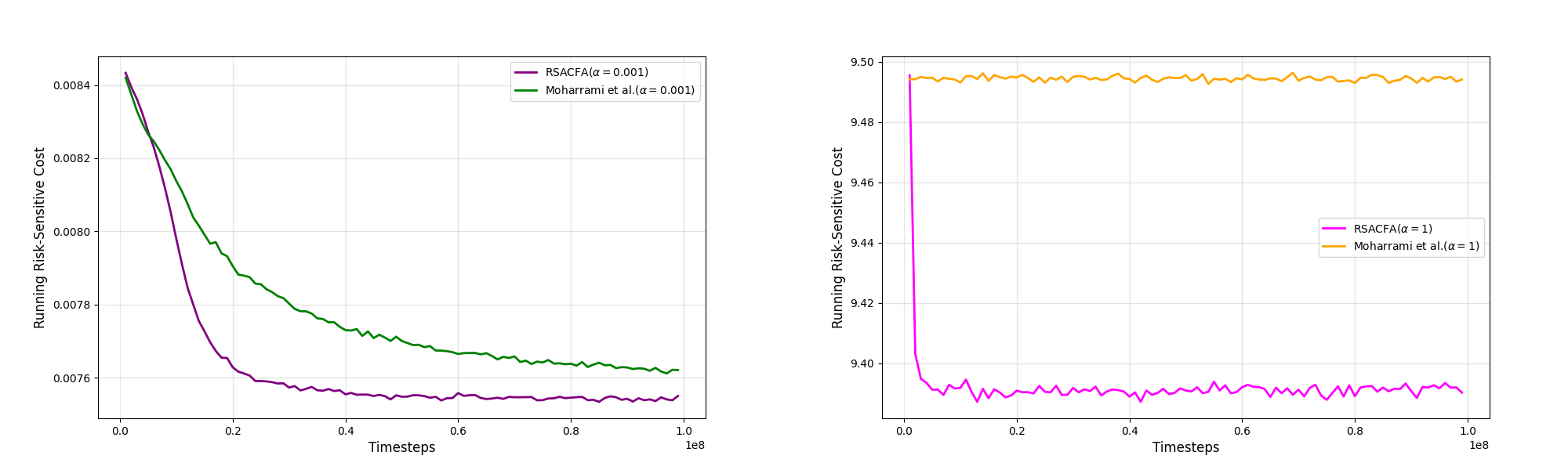}
\caption{State space size $|S|=100$, Action space size $|A|=9$}
\label{fig5}
\end{figure*}

\begin{figure*}
\centering
\includegraphics[width=1\textwidth]{./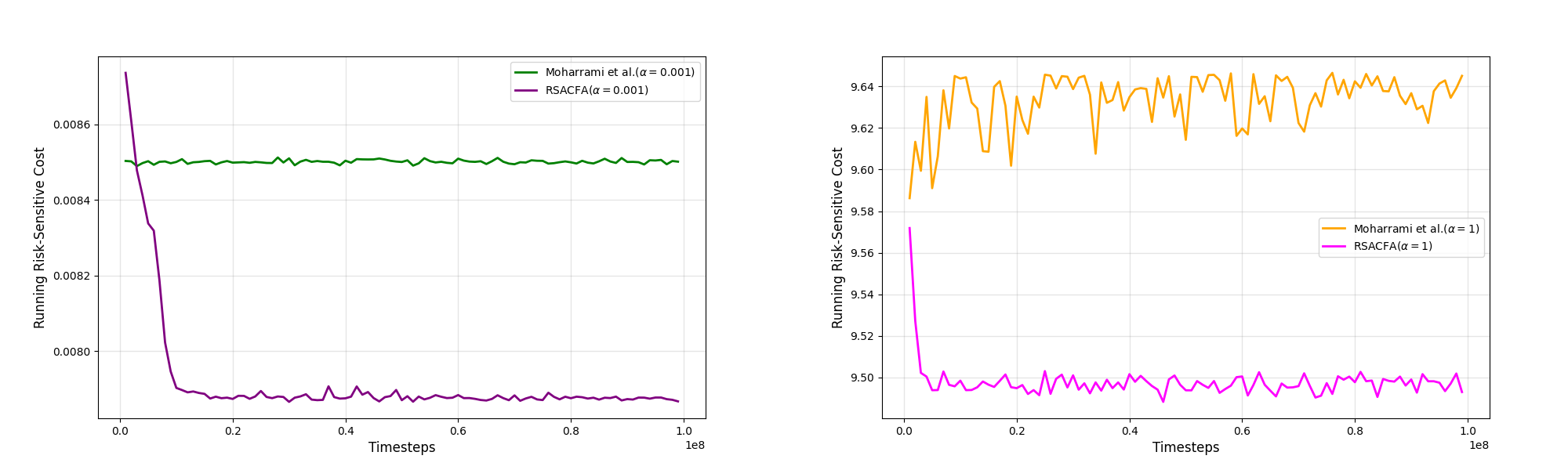}
\caption{State space size $|S|=10000$, Action space size $|A|=9$}
\label{fig6}
\end{figure*}

We compare our risk-sensitive policy gradient algorithm\footnote{The code is available at https://github.com/gsoumyajit/Risk-Sensitive-PG-with-Function-Approximation} with the algorithm in \cite{moharrami2024} and risk-sensitive analogs of other policy gradient algorithms such as for the average cost setting and discounted cost setting \cite{KondaT}, respectively, where the critic update used is given in \cite{TsitsiklisV}. We show the plots of mean and standard deviation resp., of the running costs in order to compare the performance of these algorithms. We observe that with high enough risk factor, our algorithm successfully finds policies with low standard deviation. This is in contrast to average or discounted cost settings, where one may find policies with a high standard deviation. In our experiments, the policy for every algorithm starts from the same point. We also used the same policy features for all the algorithms.

In each main figure, the upper left sub-figure plots the running average while the upper right sub-figure plots the running standard deviation, respectively, of the costs achieved by the different algorithms. To make the comparison clear, the bottom two sub-figures in each main figure plot the mean($\mu$) as the curve with the standard deviation($\sigma$) as the error bars from $\mu+\sigma$ to $\mu-\sigma$.
In each main figure, the bottom left sub-figure plots the comparison between RSACFA and \cite{moharrami2024} for risk factor $\alpha=1$ and the bottom right sub-figure plots the comparison between average cost, discounted cost, RSACFA and \cite{moharrami2024} for risk factor $\alpha=0.001$.
The experiments are run on a two-dimensional grid world setting. Along each dimension, the agent can go forward, go backward, or stay in the same position. Thus, in this setting, there are total of a $3^2=9$ actions numbered $0-8$ in each state. These can be listed as `go left', `go right', `go up', `go down', `go top-left', `go top-right', `go bottom-left', `go bottom-right' and `stay put', respectively. The effect of taking any of these actions is as follows: There is a $0.5$ probability that the agent will go one step in the direction suggested by the action and $0.5$ probability that it will go in any of the other directions.

There are some states with which a fixed cost of $10$ is associated, and with other states the cost is decided by the following rule: If the action is even valued in the aforementioned set of feasible actions, and if the direction is the same as the one suggested by the action, then the single-stage cost is $6$, else the single-stage cost is $8$. Hence, the expected value and standard deviation of the single-stage cost of taking an even action are $7$ and $1$, respectively. If, on the other hand, the action is odd-valued, and the direction of movement is the same as that suggested by the action, then a cost of $1$ is obtained, else we incur a cost of $9$. Hence, the expected value and standard deviation of taking an odd action are $5$ and $4$, respectively.

For Figure~\ref{fig1}, we ran experiments on a $3\times3$ grid ($9$ states). From the bottom sub-figures of Figure~\ref{fig1}, we can see our algorithm for risk factor $\alpha=1$ is more robust than average or discounted cost algorithms. This is because the highest point in our algorithm for $\alpha=1$ is lower than the highest point of average or discounted cost algorithms, making our policy safer. Our algorithm was able to find a policy which gave running costs with similar average and similar standard deviation to that of \cite{moharrami2024} for both risk factor $\alpha=1$ and $\alpha=0.001$. For a low risk factor of $\alpha=0.001$, our algorithm was able to give lower average but higher standard deviation than that with $\alpha=1$ and that is  similar to the average and discounted cost settings. This is expected because we know that as $\alpha \rightarrow 0$, the risk-sensitive problem tends towards an average cost problem.

For Figure~\ref{fig2}, we experimented on a $10\times10$ grid ($100$ states).
For $\alpha=1$, our algorithm was able to learn a policy which gave running costs with lower standard deviation than that of \cite{moharrami2024}. Also from the bottom sub-figures of Figure~\ref{fig2}, we can see that the highest point of our algorithm for $\alpha=1$ is lower than the highest point of that of \cite{moharrami2024}, making our policy more robust. Thus, our algorithm reduces iterate variance better. We can also see that our algorithm gives a more robust policy than the average and discounted-cost settings similar to Figure~\ref{fig1}. In Figure~\ref{fig2}, for $\alpha=0.001$, our algorithm (as expected) was able to give running costs with a similar average and standard deviation to the average and discounted cost algorithms. However, unlike our algorithm, the one in \cite{moharrami2024} did not come as close to these costs due to its slow learning resulting from higher iterate variance in their case. 
We therefore observe that our algorithm performs better than the recent policy gradient algorithm for risk-sensitive cost presented in \cite{moharrami2024} for large state spaces. Also, the algorithm in \cite{moharrami2024} takes more computational time than our algorithm RSACFA. To show an empirical comparison, we provide in Table~\ref{table1} the computational time taken by the different algorithms. It can be seen that RSACFA is better than \cite{moharrami2024} on all settings that we consider.

For Figure~\ref{fig3}, we experimented on a $100\times100$ grid with $10,000$ states. For $\alpha=1$, our algorithm was able to learn policy which gave running costs with lower standard deviation than that of \cite{moharrami2024} as well as for the average and discounted-cost criteria. From the bottom sub-figures of Figure~\ref{fig3}, we can see that the highest point of our algorithm for $\alpha=1$ is lower than the highest point of that of \cite{moharrami2024}, as well as that obtained for  average and  discounted-cost criteria, making our algorithm more robust. In Figure~\ref{fig3}, for $\alpha=0.001$, our algorithm (as expected) was able to find policies giving running costs with similar average and standard deviation as the average and discounted-cost criteria. Unlike RSACFA, the algorithm of \cite{moharrami2024} did not minimize the average or the standard deviation of the running costs. We thus observe that our algorithm shows good performance even for large state spaces unlike the algorithm of \cite{moharrami2024}, whose performance is not up to the mark in such settings.

In Figures~\ref{fig4}--\ref{fig6}, we plot the running risk-sensitive cost for the above experiments involving $9$, $100$ and $10,000$ states, respectively. The running risk-sensitive cost is calculated by the formula: $\log\left(\frac{\sum_{i=1}^N \exp(\alpha c_i)}{N}\right)$, where $c_1,\ldots, c_N$ are the single-stage costs over the last $N$ stages. In each figure, the left sub-figure corresponds to $\alpha=0.001$, while the right sub-figure corresponds to the case of $\alpha=1$. We can see that our algorithm successfully reduces the running risk-sensitive cost in all the figures. However, the algorithm in \cite{moharrami2024} was not able to reduce the risk-sensitive cost in the second sub-figure of Figure~\ref{fig5} and for both sub-figures of Figure~\ref{fig6}. This makes our algorithm more preferable to that of \cite{moharrami2024}.

\section{Conclusions}
\label{conclusion}

We presented a policy gradient algorithm for the risk-sensitive cost criterion. Our algorithm incorporates  function approximation, and gives better performance than other algorithms in the literature.
Note that unlike \cite{moharrami2024}, we do not try to estimate the risk-sensitive cost. Also, because of the form of our critic update rule (\ref{critic-2}) where the exponentiated cost in the importance sampling ratio term is multiplied by a ratio of value estimates, it can be seen that the recursions remain stable asymptotically. The actor recursion, on the other hand, remains stable due to the projection to a compact set.
An important direction would be to develop natural actor-critic algorithms such as \cite{NAC} for risk-sensitive cost as they can potentially show improved convergence behavior.

\end{document}